\newcommand{\sinc}{\text{\rm sinc}}
\newtheorem{theorem}{Theorem}
\newtheorem{lemma}{Lemma}
\title{Globally-Attractive Logarithmic Geometric Control of a Quadrotor for Aggressive Trajectory Tracking}
\author{Jacob Johnson$^{1}$, Randal Beard$^{2}$
    \thanks{$^{1}$Graduate research assistant, Electrical and Computer Engineering, Brigham Young University, {\tt\small jjohns99@byu.edu}}%
    \thanks{$^{2}$Professor of Electrical and Computer Engineering, Brigham Young University, {\tt\small beard@byu.edu}}%
    \thanks{$^{*}$This work has been funded by the Center for Unmanned Aircraft Systems (C-UAS), a National Science Foundation Industry/University Cooperative Research Center (I/UCRC) under NSF award No. IIP-1650547, along with significant contributions from C-UAS industry members.}%
}
\begin{document}

\maketitle

\begin{abstract}
    We present a new quadrotor geometric control scheme that is capable of tracking highly aggressive trajectories. Our geometric controller uses the logarithmic map of SO(3) to express rotational error in the Lie algebra, and we show that it is globally attractive without requiring a complicated hybrid switching scheme. We show the performance of our controller against highly aggressive trajectories in simulation experiments. Additionally, we present an adaptation of this controller that allows us to interface effectively with the angular rate controllers on an onboard flight control unit and show the ability of this adapted control scheme to track aggressive trajectories on a quadrotor hardware platform.
\end{abstract}

\section{Introduction}

A large number of quadrotor control methods have been presented in the literature. These methods can be sorted into three general categories: those that are linear, those that are nonlinear and non-geometric, and those that are geometric. Linear control methods neglect or approximate the nonlinear dynamics of the quadrotor by linearizing about an equilibrium point and treating the resulting dynamics as if they were the true dynamics of the system. These methods perform well as long as the state of the system remains near the equilibrium, but fail when the state leaves the resulting region of attraction. 

Nonlinear non-geometric control methods~\cite{Bouabdallah2004} compensate for certain nonlinearities in the dynamics of the quadrotor and have large regions of attraction. However, they neglect the fact that the rotation states of the quadrotor belong to the special orthogonal group SO(3). These control methods usually approximate the rotation states as a vector of Euler angles, resulting in poor performance when the rotation of the quadrotor approaches the associated singularities, or as a unit quaternion, resulting in possible unwinding phenomena \cite{Bhat2000}.

Geometric control methods correctly model the rotation states on SO(3) and are derived using methods from differential geometry. They do not have singularities and avoid the unwinding phenomena. The performance of a geometric controller is dependent on the choice of error representation used. The popular controller of \cite{Lee2010} uses the Frobenius norm of the difference between the identity matrix and the error rotation matrix as a Lyapunov function, which results in an error representation that performs poorly when the error is high (i.e. near 180 degrees). This issue was acknowledged in \cite{Lee2012} and a new error representation was proposed. However, this representation seems to lack motivation from the physics or dynamics of the system. Another representation uses the logarithmic map of SO(3) to express error in the Lie algebra $\mathfrak{so}(3)$~\cite{Bullo1995}. The logarithm maps geodesics, or shortest paths, on SO(3) to straight lines in $\mathfrak{so}(3)$, so we believe this is the most natural way to express rotational error. The logarithmic error representation has been used in several prior quadrotor controllers~\cite{Shi2017}~\cite{Yu2015}.

\begin{figure} 
    \centering
    \includegraphics[width=0.3\textwidth]{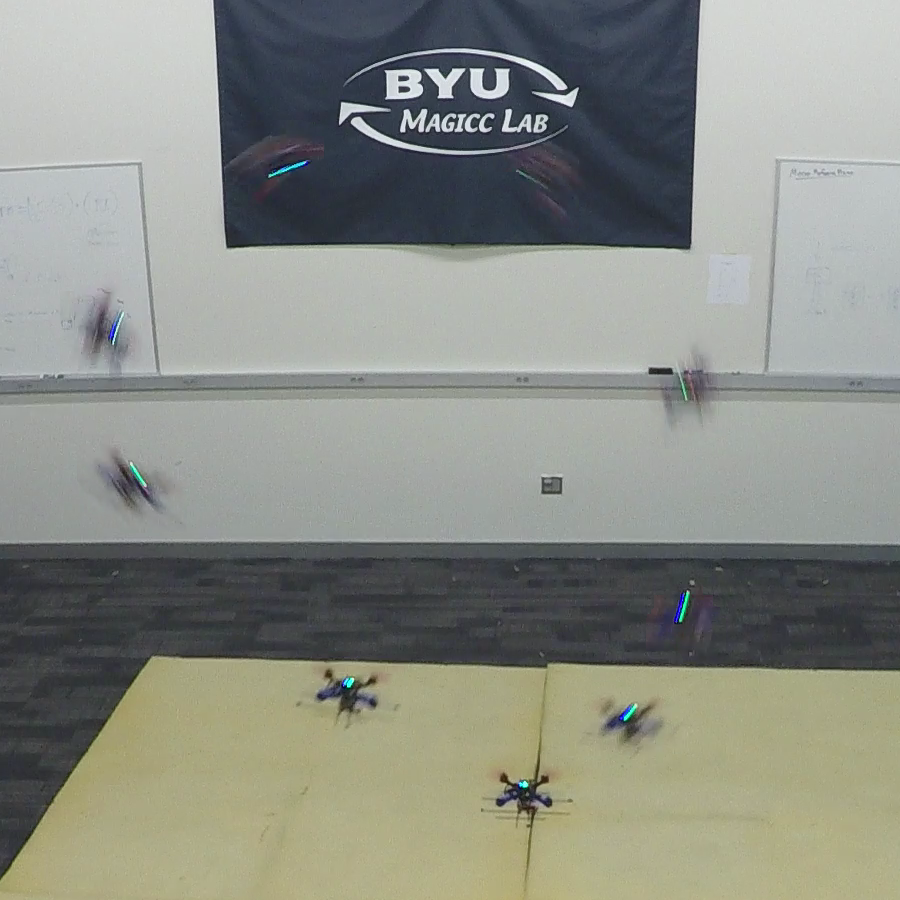}
    \caption{Time lapse image of our proposed geometric control scheme tracking a flipping loop trajectory.}
    \label{fig:flip_timelapse}
\end{figure}

It is well-known that no smooth control law on a compact manifold can be globally stable~\cite{Bhat2000}. For this reason the above-mentioned controllers are almost globally stable, i.e., there is a set of initial rotations (where the body rotation is exactly 180 degrees from the desired rotation) that are not in the region of attraction. Several non-smooth hybrid control schemes have been proposed to address this issue~\cite{Lee2015}~\cite{Yu2016}, and global stability is proven. However, hybrid controllers have complicated implementations, and the resulting jump dynamics often introduce undesirable non-smooth responses.

In this paper, we present a new geometric controller and implement it in a full trajectory tracking quadrotor control scheme. Our controller uses the logarithmic map to express the rotation error in $\mathfrak{so}(3)$, and we show that this controller is globally asymptotically stable. Our controller is discontinuous on the set where the rotation error is exactly 180 degrees, and so the results of~\cite{Bhat2000} do not apply, but it is not hybrid, so it has a more simple implementation and more smooth dynamic response than e.g.~\cite{Lee2015}. Our controller is similar to the one presented in \cite{Yu2016}, where the stronger condition of global exponential stability is proven. However, our formulation does not require describing the dynamics as a hybrid system on $\mathfrak{so}(3)$, and our proof is more simple. Additionally, we develop an adaptation to our controller that allows it to interface with the angular rate controllers that run at very high frequency on off-the-shelf onboard flight control units (FCUs), and we present highly aggressive trajectory tracking results on a hardware platform.

\section{Preliminaries}

We use the vector notation $\mathbf{t}_{a/b}^c \in \mathbb{R}^3$ to denote a value $\mathbf{t}$ (e.g. position, velocity) of coordinate frame $a$ with respect to frame $b$ expressed in frame $c$. Thus $\mathbf{t}_{a/b}^c = -\mathbf{t}_{b/a}^c$ and $\mathbf{t}_{a/b}^a = \mathbf{R}_c^a \mathbf{t}_{a/b}^c$, where $\mathbf{R}_c^a$ is a rotation matrix that re-expresses vectors from frame $c$ to frame $a$. The set of all 3D rotation matrices is isomorphic to the special orthogonal group, which can therefore be expressed as
\begin{equation}
    \text{SO}(3) = \left\{ \mathbf{R} \in \mathbb{R}^{3 \times 3} \: | \: \mathbf{R}^\top \mathbf{R} = \mathbf{I}, \: \text{det}(\mathbf{R}) = 1\right\},
\end{equation}
equipped with the group action of matrix multiplication. This set satisfies the group axioms and forms a smooth manifold, making SO(3) a Lie group. The Lie algebra of SO(3) (denoted $\mathfrak{so}(3)$) is the set of $3 \times 3$ skew-symmetric matrices and is isomorphic to $\mathbb{R}^3$ under the hat map
\begin{equation}
    \boldsymbol{\phi}^\wedge = \begin{bmatrix} 0 & -\phi_3 & \phi_2 \\ \phi_3 & 0 & -\phi_1 \\ -\phi_2 & \phi_1 & 0 \end{bmatrix}
\end{equation}
for $\boldsymbol{\phi} \in \mathbb{R}^3$. Skew-symmetric matrices can be mapped back to $\mathbb{R}^3$ using the vee map $\left( \boldsymbol{\phi}^\wedge \right)^\vee = \boldsymbol{\phi}$. For $\mathbf{R} \in \text{SO}(3)$,
\begin{equation}
    \left( \mathbf{R} \boldsymbol{\phi} \right)^\wedge = \mathbf{R} \boldsymbol{\phi}^\wedge \mathbf{R}^\top,
\end{equation}
and for $\mathbf{a}, \mathbf{b} \in \mathbb{R}^3$, $\mathbf{a}^\wedge \mathbf{b} = -\mathbf{b}^\wedge \mathbf{a}$.

The exponential map 
\begin{equation}
    \text{Exp}(\boldsymbol{\phi}) = \mathbf{I} + \text{sin}(\phi)\mathbf{u}^\wedge + (1 - \text{cos}(\phi)) \mathbf{u}^\wedge \mathbf{u}^\wedge,
    \label{eq:exp}
\end{equation}
where $\boldsymbol{\phi} = \phi \mathbf{u}$ and $\mathbf{u} \in \mathbb{R}^3$ is a unit vector, can be used to map from $\mathbb{R}^3$ to SO(3). Its inverse is the logarithmic map 
\begin{subequations}
\begin{align}
    \phi &= \text{cos}^{-1}\left(\frac{\text{tr}(\mathbf{R}) - 1}{2}\right),
    \label{eq:log} \\
    \text{Log}(\mathbf{R}) &\stackrel{\triangle}{=} \phi \mathbf{u}   
    = \frac{1}{2 \sinc(\phi/2)\cos(\phi/2)}(\mathbf{R} - \mathbf{R}^\top)^\vee, \label{eq:log1} 
\end{align}
\end{subequations}
where $\sinc(x)\stackrel{\triangle}{=}\sin(x)/x$ is nonzero for $x\in(-\pi, \pi)$ and so $\sinc(\phi/2)$ is nonzero for $\phi\in(-2\pi, 2\pi)$.  If $\phi=\pm\pi$ as computed from Equation~\eqref{eq:log} then Equation~\eqref{eq:log1} is not defined, however from Equation~\eqref{eq:exp} we have that $\mathbf{R}\mathbf{u}=\mathbf{u}$ and therefore $\mathbf{u}$ can be computed using an eigen-decomposition, implying that $\text{Log}(\mathbf{R})$ is well defined on SO(3).
Additionally, we will make use of the left Jacobian of SO(3)
\begin{equation}
\begin{split}
    J_l(\boldsymbol{\phi}) &= \mathbf{I} + \sin\left(\phi/2\right)\sinc\left(\phi/2\right)\mathbf{u}^\wedge + (1-\sinc(\phi)) \mathbf{u}^\wedge \mathbf{u}^\wedge \\
    &= \int_0^1 \text{Exp}(\boldsymbol{\phi})^\alpha d\alpha
    \label{eq:jl}
\end{split}
\end{equation}
and its inverse
\begin{equation}
    J_l^{-1}(\boldsymbol{\phi}) = \mathbf{I} - \frac{\phi}{2} \mathbf{u}^\wedge + \left( 1 - \frac{\cos\left(\phi/2\right)}{\sinc\left(\phi/2\right)} \right) \mathbf{u}^\wedge \mathbf{u}^\wedge,
    \label{eq:jl_inv}
\end{equation}
where we note that $J_l(\boldsymbol{\phi})$ and $J_l^{-1}(\boldsymbol{\phi})$ are well defined on $\phi\in[-\pi, \pi]$.

\section{Quadrotor Dynamics}

The state of the quadrotor is given by the tuple $\mathbf{x} = \left( \mathbf{p}_{b/i}^i, \: \mathbf{v}_{b/i}^i, \: \mathbf{R}_b^i, \: \boldsymbol{\omega}_{b/i}^b \right)$, where $\mathbf{p}_{b/i}^i, \: \mathbf{v}_{b/i}^i \in \mathbb{R}^3$ are the position and velocity of the body frame (the coordinate system whose origin lies at the center of mass of the vehicle, with the $\mathbf{i}$ and $\mathbf{j}$ axes pointing out the front and right sides of the vehicle and the $\mathbf{k}$ axis pointing out its underside) expressed in some north-east-down inertial frame, $\mathbf{R}_b^i \in \text{SO}(3)$ is the rotation from the body frame to the inertial frame, and $\boldsymbol{\omega}_{b/i}^b \in \mathbb{R}^3$ is the angular velocity of the body frame expressed in the body frame.

We model the dynamics of the quadrotor using the equations \cite{Mahony2012}
\begin{subequations}
\begin{align}
    \dot{\mathbf{p}}_{b/i}^i &= \mathbf{v}_{b/i}^i, \label{eq:p_dyn} \\
    \dot{\mathbf{v}}_{b/i}^i &= g\mathbf{e}_3 - \frac{T}{m} \mathbf{R}_b^i \mathbf{e}_3, \label{eq; v_dyn}\\
    \dot{\mathbf{R}}_b^i &= \mathbf{R}_b^i {\boldsymbol{\omega}_{b/i}^b}^\wedge, \label{eq:rot_dyn}\\
    \mathbf{J}\dot{\boldsymbol{\omega}}_{b/i}^b &= -{\boldsymbol{\omega}_{b/i}^b}^\wedge \mathbf{J} \boldsymbol{\omega}_{b/i}^b + \boldsymbol{\tau}^b,
    \label{eq:ang_dyn}
\end{align}
\label{eq:dynamics}%
\end{subequations}
where $g$ is the gravitational constant, $m$ is the mass of the vehicle, $\mathbf{J} \in \mathbb{R}^{3 \times 3}$ is the inertia matrix, $T$ is the total force produced by the rotors, $\boldsymbol{\tau}^b \in \mathbb{R}^3$ is the total moment vector produced by the rotors expressed in the body frame, and $\mathbf{e}_3 = \begin{bmatrix} 0 & 0 & 1 \end{bmatrix}^\top$. Motor throttles $\boldsymbol{\delta} \in \mathbb{R}^4$, $\delta_i \in [0,1]$ can be mapped to a total thrust and moment vector using the linear relationship
\begin{equation}
    \begin{bmatrix} T \\ \boldsymbol{\tau}^b \end{bmatrix} = \mathbf{M} \boldsymbol{\delta},
    \label{eq:mix}
\end{equation}
where $\mathbf{M} \in \mathbb{R}^{4 \times 4}$ is an invertible constant mixing matrix that captures vehicle-specific configuration details, such as the position of each rotor with respect to the center of mass, the amount of thrust and torque a single rotor is able to produce, etc. See \cite{Mahony2012} for more details.

\section{Controller Architecture}

\begin{figure}
    \centering
    \includegraphics[width=0.3\textwidth]{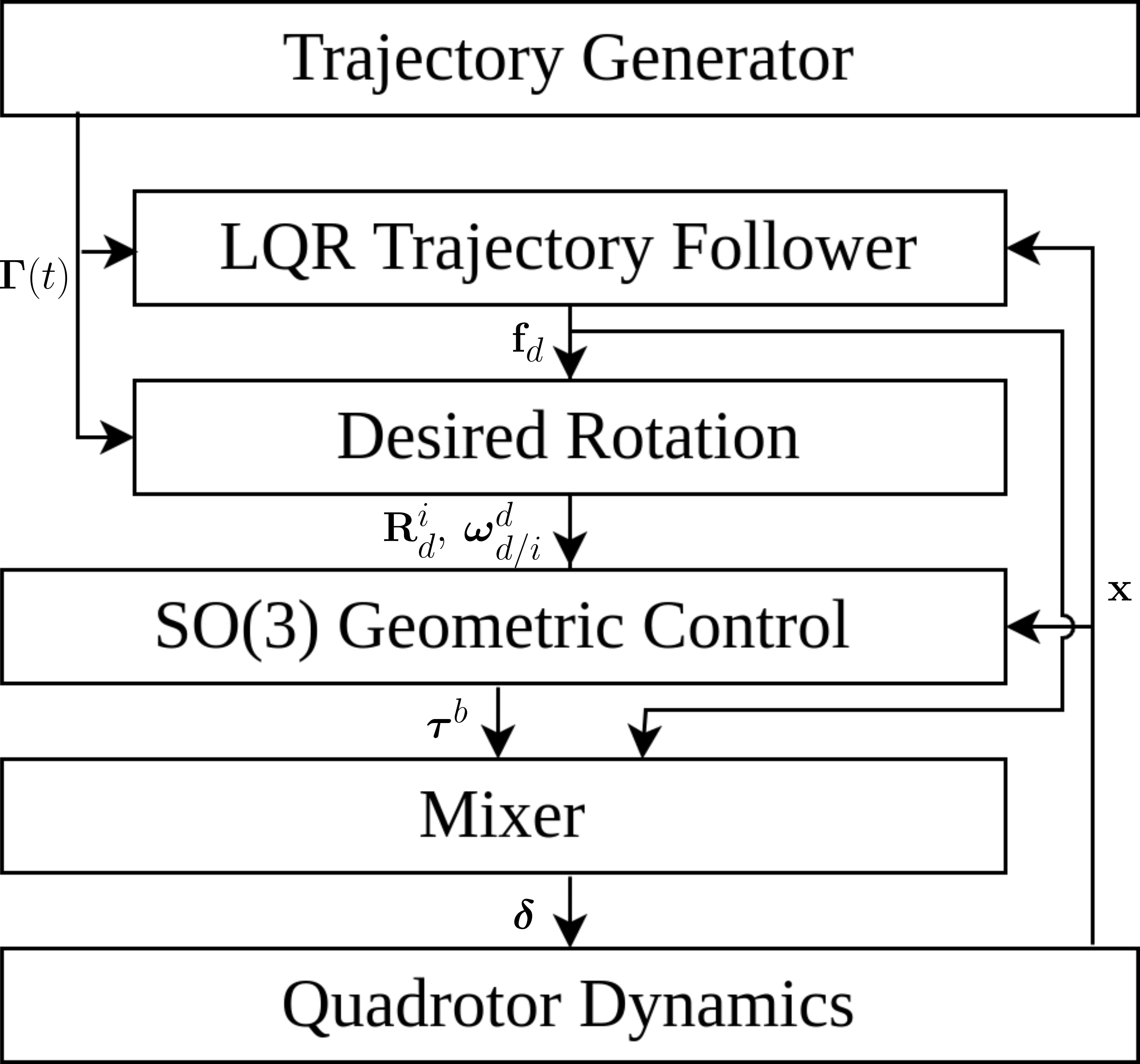}
    \caption{The architecture of the proposed control scheme.}
    \label{fig:arch}
\end{figure}

The architecture of the proposed controller is shown in Figure \ref{fig:arch}. We aim to follow three-times-differentiable trajectories, along with a desired heading and heading rate. The trajectory generator block provides the desired trajectory parameters at time $t$, represented by the tuple $\boldsymbol{\Gamma}(t) = \left(\mathbf{p}_d(t), \: \dot{\mathbf{p}}_d(t), \: \ddot{\mathbf{p}}_d(t), \dddot{\mathbf{p}}_d(t), \: \psi_d(t), \: \dot{\psi}_d(t)\right)$, where $\mathbf{p}_d(t), \: \dot{\mathbf{p}}_d(t), \: \ddot{\mathbf{p}}_d(t), \: \dddot{\mathbf{p}}_d(t) \in \mathbb{R}^3$ are respectively the desired position, velocity, acceleration and jerk of the body frame with respect to the inertial frame expressed in the inertial frame, and $\psi_d(t), \: \dot{\psi}_d(t) \in \mathbb{R}$ are the desired heading and heading rate. A trajectory following LQR controller uses the desired trajectory and the current state to produce a desired force vector $\mathbf{f}_d \in \mathbb{R}^3$. Using the fact that quadrotors are only capable of producing force along the body $\mathbf{k}$-axis, a desired rotation matrix $\mathbf{R}_d^i \in \text{SO}(3)$ and angular rate $\boldsymbol{\omega}_{d/i}^d$ are computed so that the desired $\mathbf{k}$-axis aligns with the desired force $\mathbf{f}_d$. A geometric controller on SO(3) uses these desired rotation states in addition to the current vehicle state to compute the torque the rotors must produce in order to drive the rotational error to zero. The required forces and torques are then mixed using the inverse of \eqref{eq:mix} to find the required motor throttles, which are then saturated between 0 and 1 before they are fed to the motors on the quadrotor.

Prior works~\cite{Lee2010}~\cite{Yu2016} prove the stability of the coupled translational and rotational dynamics given certain initial conditions. We will not do so in this paper given space limitations.

\section{Trajectory-following LQR Control}

We begin by assuming that the quadrotor is able to produce any desired force $\mathbf{f}_d$ (we will relax this assumption in the next section). The position and velocity dynamics then become 
\begin{subequations}
\begin{align}
    \dot{\mathbf{p}}_{b/i}^i &= \mathbf{v}_{b/i}^i, \\
    \dot{\mathbf{v}}_{b/i}^i &= g \mathbf{e}_3 + \frac{1}{m} \mathbf{f}_d.
\end{align}
\end{subequations}
Define the error states
\begin{subequations}
\begin{align}
    \mathbf{e}_p &= \mathbf{p}_{b/i}^i - \mathbf{p}_d, \\
    \mathbf{e}_v &= \mathbf{v}_{b/i}^i - \dot{\mathbf{p}}_d.
\end{align}
\end{subequations}
Taking the time derivative, we get the error dynamics
\begin{subequations}
\begin{align}
    \dot{\mathbf{e}}_p &= \mathbf{e}_v, \label{eq:ep_dot}\\
    \dot{\mathbf{e}}_v &= g \mathbf{e}_3 + \frac{1}{m} \mathbf{f}_d - \ddot{\mathbf{p}}_d.
    \label{eq:ev_dot}
\end{align}
\label{eq:error_dynamics}
\end{subequations}
Note that these dynamics are linear. Define $\tilde{\mathbf{f}} = \mathbf{f}_d - \mathbf{f}_{\text{eq}}$, where 
\begin{equation}
    \mathbf{f}_{\text{eq}} = m(-g \mathbf{e}_3 + \ddot{\mathbf{p}}_d)
\end{equation}
is the force at equilibrium. Then the error state dynamics \eqref{eq:error_dynamics} are represented in state-space form as 
\begin{equation}
    \dot{\mathbf{e}} = \begin{bmatrix} \mathbf{0} & \mathbf{I} \\ \mathbf{0} & \mathbf{0}\end{bmatrix} \mathbf{e} + \begin{bmatrix} \mathbf{0} \\ \frac{1}{m} \mathbf{I} \end{bmatrix} \tilde{\mathbf{f}},
\end{equation}
where $\mathbf{e} = \begin{bmatrix} \mathbf{e}_p^\top & \mathbf{e}_v^\top \end{bmatrix}^\top$. Additionally, we can augment these error dynamics with the integrator
\begin{equation}
    \mathbf{e}_i = \int_0^t \mathbf{e}_p \: dt
\end{equation}
such that
\begin{equation}
    \dot{\mathbf{e}}_a = \underbrace{\begin{bmatrix} \mathbf{0} & \mathbf{I} & \mathbf{0} \\ \mathbf{0} & \mathbf{0} & \mathbf{0} \\ \mathbf{I} & \mathbf{0} & \mathbf{0} \end{bmatrix}}_{\mathbf{A}_a} \mathbf{e}_a + \underbrace{\begin{bmatrix} \mathbf{0} \\ \frac{1}{m} \mathbf{I} \\ \mathbf{0} \end{bmatrix}}_{\mathbf{B}_a} \tilde{\mathbf{f}},
    \label{eq:e_aug}
\end{equation}
where $\mathbf{e}_a = \begin{bmatrix} \mathbf{e}_p^\top & \mathbf{e}_v^\top & \mathbf{e}_i^\top \end{bmatrix}^\top$. These dynamics can easily be shown to be controllable.

Choosing the LQR objective function
\begin{equation}
    \mathcal{J}_\text{LQR}(\mathbf{e}_a, \tilde{\mathbf{f}}) = \int_0^\infty \left( \mathbf{e}_a^\top \mathbf{W}_e \mathbf{e}_a + \tilde{\mathbf{f}}^\top \mathbf{W}_f \tilde{\mathbf{f}} \right) dt,
    \label{eq:lqr}
\end{equation}
where $\mathbf{W}_e \in \mathbb{R}^{9 \times 9}$ and $\mathbf{W}_f \in \mathbb{R}^{3 \times 3}$ are symmetric positive definite matrices, the controller that minimizes \eqref{eq:lqr} and exponentially stabilizes the error dynamics \eqref{eq:e_aug} is given by
\begin{equation}
    \tilde{\mathbf{f}} = -\mathbf{K} \mathbf{e}_a \: \Longrightarrow \: \mathbf{f}_d = -\mathbf{K} \mathbf{e}_a + \mathbf{f}_\text{eq},
\end{equation}
where $\mathbf{K} = \mathbf{W}_f^{-1} \mathbf{B}_a^\top \mathbf{P}_\text{LQR}$ and $\mathbf{P}_\text{LQR}$ is the solution to the continuous-time algebraic Riccati equation.

\section{Desired Rotation}

From the trajectory-following LQR controller we receive a desired force vector $\mathbf{f}_d$. In the previous section we assumed that the quadrotor could produce any desired force, but in reality it can only produce force in the direction of its rotors, along the body $\mathbf{k}$-axis. The vehicle will be able to achieve the desired force only if this axis is aligned with the force vector.

We follow the method presented in \cite{Lee2010} to construct a desired rotation matrix $\mathbf{R}_d^i \in \text{SO}(3)$ (the rotation from the ``desired" frame to the inertial frame) such that the desired $\mathbf{k}$-axis is aligned with $\mathbf{f}_d$. Noting that the columns of a rotation matrix are the coordinate-frame axis vectors, we set $\mathbf{R}_d^i = \begin{bmatrix} \mathbf{i}_d & \mathbf{j}_d & \mathbf{k}_d\end{bmatrix}$, where $\mathbf{i}_d, \; \mathbf{j}_d, \; \text{and } \mathbf{k}_d$ are the desired coordinate axes expressed in the inertial frame. Set 
\begin{subequations}
\begin{equation}
    \mathbf{k}_d = -\frac{\mathbf{f}_d}{\lVert \mathbf{f}_d \rVert}
    \label{eq:kd}
\end{equation}
to align the rotors to the desired force vector. The desired rotation about $\mathbf{k}_d$ can be chosen arbitrarily. To constrain the rotation matrix, we provide a desired heading $\psi_d$ from the trajectory generator, implying that 
\begin{equation}
    \mathbf{j}_d = \frac{\mathbf{k}_d \times \mathbf{s}_d}{\lVert \mathbf{k}_d \times \mathbf{s}_d \rVert}, 
    \qquad \mathbf{i}_d = \mathbf{j}_d \times \mathbf{k}_d,
    \label{eq:jd}
\end{equation}
where $\mathbf{s}_d = \begin{bmatrix} \text{cos}(\psi_d) & \text{sin}(\psi_d) & 0\end{bmatrix}^\top$.
\label{eq:desired_axes}
\end{subequations}

The desired angular velocity $\boldsymbol{\omega}_{d/i}^d$ is constructed from the rotational kinematics,
\begin{equation}
    \dot{\mathbf{R}}_d^i = \mathbf{R}_d^i {\boldsymbol{\omega}_{d/i}^d}^\wedge \: \Longrightarrow \: \boldsymbol{\omega}_{d/i}^d = \left( {\mathbf{R}_d^i}^\top \dot{\mathbf{R}}_d^i \right)^\vee,
\end{equation}
where $\dot{\mathbf{R}}_d^i = \begin{bmatrix} \dot{\mathbf{i}}_d & \dot{\mathbf{j}}_d & \dot{\mathbf{k}}_d \end{bmatrix}$, and  $\dot{\mathbf{i}}_d, \; \dot{\mathbf{j}}_d, \; \text{and } \dot{\mathbf{k}}_d$ are found by differentiating \eqref{eq:desired_axes}.

Contrary to what is done in \cite{Lee2010}, we set the total thrust of the motors to be 
\begin{equation}
    T = \lVert \mathbf{f}_d \rVert
\end{equation}
as opposed to $T = -\mathbf{f}_d^\top {\mathbf{R}_b^i}^\top \mathbf{e}_3$. While the latter can be proven to stabilize the full rigid-body dynamics when the attitude tracking error is within a bounded region~\cite{Lee2010}, we found that the former was able to track much more aggressive trajectories. This is likely because the thrust is not as limited, and the rotational system converges quickly enough that the direction of applied thrust is almost always close to the desired direction of thrust.

\section{Control on SO(3) Using Logarithmic Error}
\label{sec:geom_ctrl}

We develop a geometric controller on SO(3) to track the desired rotation $\mathbf{R}_d^i$ and angular velocity $\boldsymbol{\omega}_{d/i}^d$ given the dynamics (\ref{eq:rot_dyn},~\ref{eq:ang_dyn}) and prove that it is globally attractive. 

\begin{figure}
    \centering
    \includegraphics[width=0.47\textwidth]{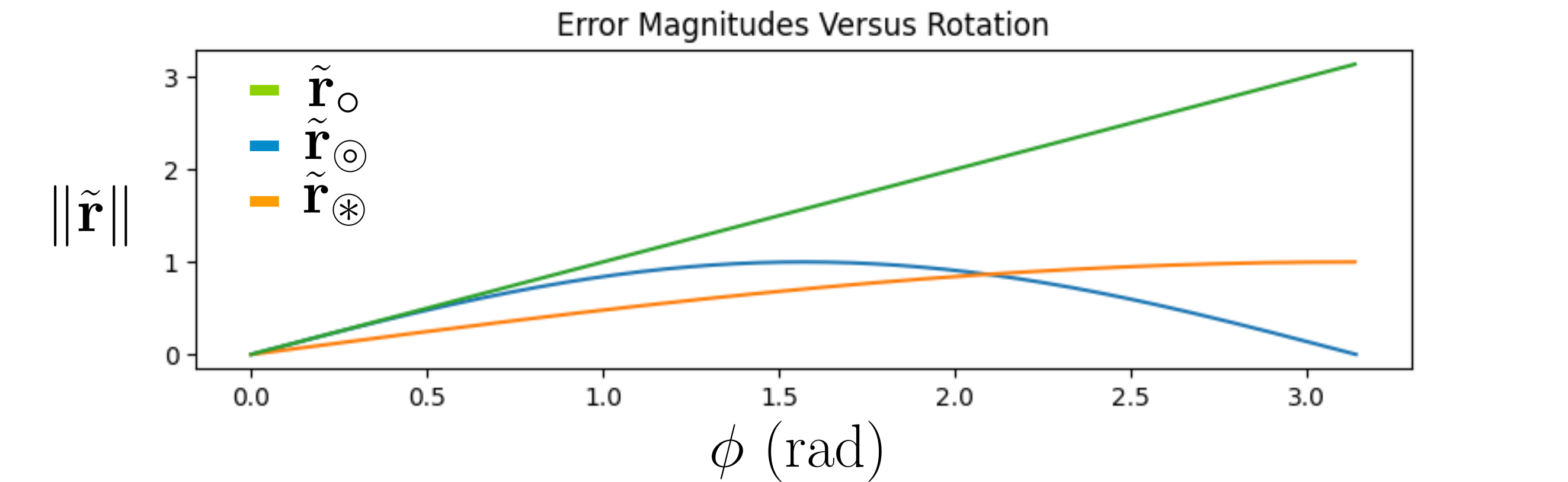}
    \caption{Error function comparisons.}
    \label{fig:err_func}
\end{figure}

The rotation from the desired frame to the body frame is $\mathbf{R}_d^b = {\mathbf{R}_b^i}^\top \mathbf{R}_d^i$. We evaluate three choices for the error rotation: $\tilde{\mathbf{r}}_\circledcirc = \frac{1}{2}(\mathbf{R}_d^b - {\mathbf{R}_d^b}^\top)^\vee$ which comes from using the Lyapunov function $\frac{1}{4} \left\|\mathbf{I} - \mathbf{R}_d^b\right\|_F^2$~\cite{Lee2010}, $\tilde{\mathbf{r}}_\circledast = \frac{1}{2 \sqrt{1 + \text{tr}(\mathbf{R}_d^b})}(\mathbf{R}_d^b - {\mathbf{R}_d^b}^\top)^\vee$ used in \cite{Lee2012}, and the logarithmic map $\tilde{\mathbf{r}}_\circ = \text{Log}(\mathbf{R}_d^b)$. Figure \ref{fig:err_func} shows the magnitude of each error expression versus $\phi \in [0,\pi]$ given that $\mathbf{R}_d^b = \text{Exp}(\begin{bmatrix} \phi & 0 & 0\end{bmatrix}^\top)$.
Unsurprisingly, $\lVert\tilde{\mathbf{r}}_\circ\rVert$ increases linearly with $\phi$ because $\lVert \tilde{\mathbf{r}}_\circ \rVert = \lVert \text{Log}(\mathbf{R}_d^b) \rVert = \phi$. This shows that the logarithmic map maps geodesics in $\text{SO}(3)$ to straight lines in the Lie algebra. For this reason we believe that the logarithmic map is the most effective and natural method for representing rotational error, thus we choose the rotational error expression to be 
\begin{equation}
    \tilde{\mathbf{r}} = \tilde{\mathbf{r}}_\circ = \text{Log}(\mathbf{R}_d^b).
    \label{eq:r_err}
\end{equation}
We express the error in angular velocity as
\begin{equation}
    \tilde{\boldsymbol{\omega}} = \mathbf{R}_d^b \boldsymbol{\omega}_{d/i}^d - \boldsymbol{\omega}_{b/i}^b = \boldsymbol{\omega}_{d/b}^b.
    \label{eq:om_err}
\end{equation}

\begin{lemma}
    The dynamics of $\tilde{\mathbf{r}}$ are given by 
    \begin{equation}
        \dot{\tilde{\mathbf{r}}} = J_l(\tilde{\mathbf{r}})^{-1} \tilde{\boldsymbol{\omega}},
        \label{eq:r_err_dot}
    \end{equation}
    where $J_l(\tilde{\mathbf{r}})$ is the left Jacobian of $\text{SO(3)}$.
\end{lemma}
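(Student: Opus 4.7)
The plan is to differentiate the identity $\mathbf{R}_d^b = \text{Exp}(\tilde{\mathbf{r}})$ two different ways and match the results. First I would compute $\dot{\mathbf{R}}_d^b$ directly from the kinematics. Since $\mathbf{R}_d^b = {\mathbf{R}_b^i}^\top \mathbf{R}_d^i$, the rotational dynamics~\eqref{eq:rot_dyn} applied to both factors give
\begin{equation*}
    \dot{\mathbf{R}}_d^b = -(\boldsymbol{\omega}_{b/i}^b)^\wedge \mathbf{R}_d^b + \mathbf{R}_d^b (\boldsymbol{\omega}_{d/i}^d)^\wedge.
\end{equation*}
Using the conjugation identity $\mathbf{R}\boldsymbol{\phi}^\wedge \mathbf{R}^\top = (\mathbf{R}\boldsymbol{\phi})^\wedge$ with $\mathbf{R} = \mathbf{R}_d^b$, the second term becomes $(\mathbf{R}_d^b \boldsymbol{\omega}_{d/i}^d)^\wedge \mathbf{R}_d^b$, so the two terms collect into
\begin{equation*}
    \dot{\mathbf{R}}_d^b = \bigl(\mathbf{R}_d^b \boldsymbol{\omega}_{d/i}^d - \boldsymbol{\omega}_{b/i}^b\bigr)^\wedge \mathbf{R}_d^b = \tilde{\boldsymbol{\omega}}^\wedge \mathbf{R}_d^b.
\end{equation*}

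Next, I would differentiate $\mathbf{R}_d^b = \text{Exp}(\tilde{\mathbf{r}}(t))$ as a time-dependent matrix exponential. The standard derivative-of-exponential formula, which follows from the integral representation of $J_l$ given in~\eqref{eq:jl}, yields
\begin{equation*}
    \dot{\mathbf{R}}_d^b \, {\mathbf{R}_d^b}^\top = \bigl(J_l(\tilde{\mathbf{r}}) \dot{\tilde{\mathbf{r}}}\bigr)^\wedge.
\end{equation*}
This is the one nontrivial ingredient: I would derive it by writing $\frac{d}{dt}\text{Exp}(\tilde{\mathbf{r}}) = \int_0^1 \text{Exp}(\alpha \tilde{\mathbf{r}})\, \dot{\tilde{\mathbf{r}}}^\wedge \,\text{Exp}((1-\alpha)\tilde{\mathbf{r}})\, d\alpha$, right-multiplying by $\text{Exp}(-\tilde{\mathbf{r}})$, and pulling an $\text{Exp}(\alpha \tilde{\mathbf{r}})$ through $\dot{\tilde{\mathbf{r}}}^\wedge$ on the right using the conjugation identity again. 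The resulting $\int_0^1 \text{Exp}(\alpha \tilde{\mathbf{r}}) \dot{\tilde{\mathbf{r}}}\, d\alpha$ is exactly $J_l(\tilde{\mathbf{r}}) \dot{\tilde{\mathbf{r}}}$ by~\eqref{eq:jl}.

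Finally, equating the two expressions for $\dot{\mathbf{R}}_d^b {\mathbf{R}_d^b}^\top$ gives $(J_l(\tilde{\mathbf{r}}) \dot{\tilde{\mathbf{r}}})^\wedge = \tilde{\boldsymbol{\omega}}^\wedge$. Applying the vee map and multiplying by $J_l(\tilde{\mathbf{r}})^{-1}$ yields the claimed~\eqref{eq:r_err_dot}. The main obstacle is the derivative-of-exponential step: everything else is bookkeeping with the two SO(3) kinematic equations and the hat/conjugation identities already stated in the Preliminaries. One minor subtlety is that $J_l(\tilde{\mathbf{r}})^{-1}$ is singular at $\|\tilde{\mathbf{r}}\| = \pi$, but since $\tilde{\mathbf{r}} = \text{Log}(\mathbf{R}_d^b)$ is itself only defined modulo this ambiguity, the identity should be interpreted in the limiting sense discussed in the Appendix, which is the natural domain on which the logarithmic error is smooth.
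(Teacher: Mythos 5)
Your proposal is correct and follows essentially the same route as the paper: differentiate $\mathbf{R}_d^b = \text{Exp}(\tilde{\mathbf{r}})$ via the integral derivative-of-exponential formula to get $\dot{\mathbf{R}}_d^b {\mathbf{R}_d^b}^\top = \bigl(J_l(\tilde{\mathbf{r}})\dot{\tilde{\mathbf{r}}}\bigr)^\wedge$, identify this with $\tilde{\boldsymbol{\omega}}^\wedge$, and invert $J_l$. The only difference is that you explicitly derive $\dot{\mathbf{R}}_d^b = \tilde{\boldsymbol{\omega}}^\wedge \mathbf{R}_d^b$ from the kinematics of the two factors, a step the paper simply asserts.
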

\begin{proof}
    We follow the proof given in \cite{Barfoot2017}. Noting that $\mathbf{R}_d^b = \text{Exp}(\tilde{\mathbf{r}})$, we take the time derivative to obtain
    \begin{equation}
        \dot{\mathbf{R}}_d^b = \frac{d}{dt} \text{exp}(\tilde{\mathbf{r}}^\wedge) = \int_0^1 \text{exp}(\alpha \tilde{\mathbf{r}}^\wedge) \dot{\tilde{\mathbf{r}}}^\wedge \text{exp}((1-\alpha)\tilde{\mathbf{r}}^\wedge) d\alpha,
    \end{equation}
    where we have used the expression for the time derivative of the matrix exponential. Rearranging, we get
    \begin{equation}
    \begin{split}
        &\dot{\mathbf{R}}_d^b = \int_0^1 {\mathbf{R}_d^b}^\alpha \dot{\tilde{\mathbf{r}}}^\wedge {\mathbf{R}_d^b}^{1-\alpha} d\alpha = \left( \int_0^1 {\mathbf{R}_d^b}^\alpha \dot{\tilde{\mathbf{r}}}^\wedge {\mathbf{R}_d^b}^{-\alpha} d\alpha \right) \mathbf{R}_d^b \\
        &\Longrightarrow \dot{\mathbf{R}}_d^b {\mathbf{R}_d^b}^\top = \int_0^1 \left( {\mathbf{R}_d^b}^\alpha \dot{\tilde{\mathbf{r}}}\right)^\wedge d\alpha = \left( \int_0^1 {\mathbf{R}_d^b}^\alpha d\alpha \; \dot{\tilde{\mathbf{r}}} \right)^\wedge \\
        &= \left( J_l(\tilde{\mathbf{r}}) \dot{\tilde{\mathbf{r}}} \right)^\wedge.
    \end{split}
    \end{equation}
    Finally, noting that 
\begin{equation}
    \dot{\mathbf{R}}_d^b = \mathbf{R}_d^b {\boldsymbol{\omega}_{d/b}^d}^\wedge = \tilde{\boldsymbol{\omega}}^\wedge \mathbf{R}_d^b \: \Longrightarrow \: \tilde{\boldsymbol{\omega}} = \left( \dot{\mathbf{R}}_d^b {\mathbf{R}_d^b}^\top \right)^\vee,
\end{equation}
we get
\begin{equation}
    J_l(\tilde{\mathbf{r}}) \dot{\tilde{\mathbf{r}}} = \tilde{\boldsymbol{\omega}} \: \Longrightarrow \: \dot{\tilde{\mathbf{r}}} = J_l(\tilde{\mathbf{r}})^{-1} \tilde{\boldsymbol{\omega}}.
\end{equation}
\end{proof}

Additionally, the angular velocity error dynamics are given by 
\begin{equation}
    \mathbf{J} \dot{\tilde{\boldsymbol{\omega}}} = \mathbf{J}\dot{\boldsymbol{\omega}}_{d/i}^b + {\boldsymbol{\omega}_{b/i}^b}^\wedge \mathbf{J}\boldsymbol{\omega}_{b/i}^b - \boldsymbol{\tau}^b,
    \label{eq:om_err_dot}
\end{equation}
where we note that
\begin{equation}
\begin{split}
    \dot{\boldsymbol{\omega}}_{d/i}^b &= \frac{d}{dt}\left( \mathbf{R}_d^b \boldsymbol{\omega}_{d/i}^d \right) = \mathbf{R}_d^b \dot{\boldsymbol{\omega}}_{d/i}^d + \dot{\mathbf{R}}_d^b \boldsymbol{\omega}_{d/i}^d \\
    &= \mathbf{R}_d^b \dot{\boldsymbol{\omega}}_{d/i}^d + {\tilde{\boldsymbol{\omega}}}^\wedge \mathbf{R}_d^b \boldsymbol{\omega}_{d/i}^d \\
    &= \mathbf{R}_d^b \dot{\boldsymbol{\omega}}_{d/i}^d + (\mathbf{R}_d^b \boldsymbol{\omega}_{d/i}^d - \boldsymbol{\omega}_{b/i}^b)^\wedge \mathbf{R}_d^b \boldsymbol{\omega}_{d/i}^d \\
    &= \mathbf{R}_d^b \dot{\boldsymbol{\omega}}_{d/i}^d - {\boldsymbol{\omega}_{b/i}^b}^\wedge \mathbf{R}_d^b \boldsymbol{\omega}_{d/i}^d.
\end{split}
\end{equation}

Define the set 
\begin{equation}
    S \triangleq \{ \phi \mathbf{u} \: | \: -\pi < \phi < \pi, \: \mathbf{u}^\top \mathbf{u} = 1\},
    \label{eq:set_def}
\end{equation}
and its closure $\bar{S}$ where $-\pi\leq \phi \leq \pi$.  

\begin{theorem}
    \label{thm:full_dyn_ctrl}
    Given the dynamics~\eqref{eq:r_err_dot}, and~\eqref{eq:om_err_dot}, the control law
    \begin{equation}
        \boldsymbol{\tau}^b = {\boldsymbol{\omega}_{b/i}^b}^\wedge \mathbf{J} \boldsymbol{\omega}_{b/i}^b + \mathbf{J} \dot{\boldsymbol{\omega}}_{d/i}^b + J_l(\tilde{\mathbf{r}})^{-\top} \mathbf{K}_r \tilde{\mathbf{r}} + \mathbf{K}_\omega \tilde{\boldsymbol{\omega}},
        \label{eq:tau}
    \end{equation}
    where $\mathbf{K}_r, \: \mathbf{K}_\omega \in \mathbb{R}^{3 \times 3}$ are symmetric positive definite matrices, 
    is asymptotically stable for all
    $(\tilde{\mathbf{r}}, \tilde{\boldsymbol{\omega}})\in S\times \mathbb{R}^3$.  
    Furthermore, if $\mathbf{K}_r = k_r \mathbf{I}$ and $\mathbf{K}_\omega = k_\omega \mathbf{J}$ where $k_r$ and $k_\omega$ are scalars, then the closed-loop system is globally attractive on $\bar{S}\times\mathbb{R}^3$.
\end{theorem}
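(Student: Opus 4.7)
The plan is to substitute~\eqref{eq:tau} into~\eqref{eq:om_err_dot}. The ${\boldsymbol{\omega}_{b/i}^b}^\wedge\mathbf{J}\boldsymbol{\omega}_{b/i}^b$ and $\mathbf{J}\dot{\boldsymbol{\omega}}_{d/i}^b$ terms cancel their counterparts in the open-loop dynamics, producing the compact closed-loop error system
\begin{equation*}
\dot{\tilde{\mathbf{r}}} = J_l(\tilde{\mathbf{r}})^{-1}\tilde{\boldsymbol{\omega}}, \qquad \mathbf{J}\dot{\tilde{\boldsymbol{\omega}}} = -J_l(\tilde{\mathbf{r}})^{-\top}\mathbf{K}_r\tilde{\mathbf{r}} - \mathbf{K}_\omega\tilde{\boldsymbol{\omega}}.
\end{equation*}
For this structure I would propose the energy-like candidate $V = \tfrac{1}{2}\tilde{\mathbf{r}}^\top\mathbf{K}_r\tilde{\mathbf{r}} + \tfrac{1}{2}\tilde{\boldsymbol{\omega}}^\top\mathbf{J}\tilde{\boldsymbol{\omega}}$, which is positive definite on the open set $\|\tilde{\mathbf{r}}\| < \pi$.

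Differentiating $V$ along the closed-loop trajectories produces the cross term $\tilde{\mathbf{r}}^\top\mathbf{K}_r J_l(\tilde{\mathbf{r}})^{-1}\tilde{\boldsymbol{\omega}}$ from the rotation block and $-\tilde{\boldsymbol{\omega}}^\top J_l(\tilde{\mathbf{r}})^{-\top}\mathbf{K}_r\tilde{\mathbf{r}}$ from the feedback portion of the angular-velocity block. Because $\mathbf{K}_r$ is symmetric, these scalars are transposes of each other and cancel, leaving $\dot V = -\tilde{\boldsymbol{\omega}}^\top\mathbf{K}_\omega\tilde{\boldsymbol{\omega}}\leq 0$. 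This cancellation is the design motivation for placing the $J_l(\tilde{\mathbf{r}})^{-\top}$ factor in front of $\mathbf{K}_r\tilde{\mathbf{r}}$ in~\eqref{eq:tau}. Asymptotic convergence on $\|\tilde{\mathbf{r}}\|<\pi$ then follows from LaSalle's invariance principle: on the set $\{\dot V = 0\}$ one has $\tilde{\boldsymbol{\omega}}\equiv 0$, which through the closed-loop equation forces $J_l(\tilde{\mathbf{r}})^{-\top}\mathbf{K}_r\tilde{\mathbf{r}}\equiv 0$, and invertibility of $J_l$ on this region combined with positive definiteness of $\mathbf{K}_r$ leaves only $\tilde{\mathbf{r}} = 0$ in the largest invariant subset.

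The main obstacle, and the reason the theorem isolates the condition $\mathbf{K}_\omega = \mathbf{J}$ for the global statement, is the singular submanifold $\|\tilde{\mathbf{r}}\|=\pi$: both $\text{Log}$ and $V$ are discontinuous across it, so the Lyapunov argument does not apply there directly. To extend attractiveness to $\tilde{\mathbf{r}} = \pi\mathbf{u}$, I would show that the flow departs this submanifold immediately. Instantiating the closed-loop equation at $(\pi\mathbf{u},0)$ and using the appendix value $J_l(\pi\mathbf{u})^{-\top}\mathbf{u}=\mathbf{u}$, the choice $\mathbf{K}_\omega = \mathbf{J}$ yields $\dot{\tilde{\boldsymbol{\omega}}}(0) = -\mathbf{J}^{-1}J_l(\pi\mathbf{u})^{-\top}\mathbf{K}_r(\pi\mathbf{u})$, a vector with strictly negative $\mathbf{u}$-component after projection through $\mathbf{J}^{-1}$ and $J_l^{-1}$. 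A second-order expansion of $\|\tilde{\mathbf{r}}(t)\|^2$ at $t=0$ then gives $\tfrac{d^2}{dt^2}\|\tilde{\mathbf{r}}\|^2|_{t=0} < 0$, so $\|\tilde{\mathbf{r}}\|$ strictly decreases from $\pi$; a similar first-order argument dispatches nonzero initial $\tilde{\boldsymbol{\omega}}(0)$ on the singular set. The hardest technical step will be making this ``leaves the singular set'' argument rigorous uniformly in the unit vector $\mathbf{u}$ and guaranteeing that the trajectory actually reaches $\|\tilde{\mathbf{r}}\|<\pi$ in finite time with a $V$-value below $\tfrac{\pi^2}{2}\lambda_{\min}(\mathbf{K}_r)$, after which the sublevel-set bound on $\tilde{\mathbf{r}}$ keeps the trajectory away from the singular set and the Lyapunov analysis drives the state to the origin.
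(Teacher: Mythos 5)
Your proof is correct in outline and coincides with the paper's on the main case: the same energy function $V = \tfrac{1}{2}\tilde{\mathbf{r}}^\top\mathbf{K}_r\tilde{\mathbf{r}} + \tfrac{1}{2}\tilde{\boldsymbol{\omega}}^\top\mathbf{J}\tilde{\boldsymbol{\omega}}$, the same cancellation of the cross terms via the $J_l(\tilde{\mathbf{r}})^{-\top}$ factor giving $\dot V = -\tilde{\boldsymbol{\omega}}^\top\mathbf{K}_\omega\tilde{\boldsymbol{\omega}}$, and the same LaSalle step forcing $\tilde{\mathbf{r}}=0$ on the largest invariant set inside $S_1 = \{\phi\mathbf{u} : \phi\neq\pi\}$. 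Where you diverge is the singular set $S_2 = \{\pi\mathbf{u}\}$. The paper argues by contradiction from the kinematic constraint that $\mathbf{u}$ remain a unit vector: invariance of $S_2$ would force $\mathbf{u}^\top\tilde{\boldsymbol{\omega}} \equiv 0$, hence (differentiating, and using $\tilde{\boldsymbol{\omega}}^\top J_l^{-\top}(\pi\mathbf{u})\tilde{\boldsymbol{\omega}} = (\mathbf{u}^\top\tilde{\boldsymbol{\omega}})^2 = 0$) also $\mathbf{u}^\top\dot{\tilde{\boldsymbol{\omega}}} \equiv 0$, which with $\mathbf{K}_\omega=\mathbf{J}$ reduces to $\mathbf{u}^\top\mathbf{J}^{-1}J_l^{-\top}(\pi\mathbf{u})\mathbf{K}_r\mathbf{u}=0$ and yields a contradiction. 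You instead compute the flow directly at $(\pi\mathbf{u},\mathbf{0})$ and Taylor-expand $\|\tilde{\mathbf{r}}\|^2$; using $J_l^{-\top}(\pi\mathbf{u})\mathbf{u}=\mathbf{u}$ your second derivative is $-2\pi^2\,\mathbf{u}^\top\mathbf{J}^{-1}J_l^{-\top}(\pi\mathbf{u})\mathbf{K}_r\mathbf{u}$, so both routes ultimately hinge on positivity of the \emph{same} quadratic form (a claim the paper asserts via "the product is positive definite," which is itself not automatic for a product of non-commuting matrices --- you inherit that weak point rather than introduce it).

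The one place your sketch is genuinely thinner than the paper's is the case $\tilde{\boldsymbol{\omega}}(0)\neq\mathbf{0}$ on $S_2$. Your "similar first-order argument" only works when $\mathbf{u}^\top\tilde{\boldsymbol{\omega}}(0)\neq 0$; in the sub-case $\mathbf{u}^\top\tilde{\boldsymbol{\omega}}(0)=0$ with $\tilde{\boldsymbol{\omega}}(0)\neq\mathbf{0}$ the first-order term vanishes and your clean second-order computation (which relied on $\tilde{\boldsymbol{\omega}}=\mathbf{0}$ to kill the $\dot{J_l^{-1}}\tilde{\boldsymbol{\omega}}$ and $\dot{\tilde{\mathbf{r}}}^\top\dot{\tilde{\mathbf{r}}}$ contributions) no longer applies as stated. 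The paper's contradiction argument handles this sub-case uniformly, which is precisely why it is structured around the persistence of $\mathbf{u}^\top\tilde{\boldsymbol{\omega}}=0$ rather than around a pointwise expansion. On the other hand, you explicitly flag --- and the paper silently skips --- the re-entry issue: "$S_2$ is not invariant" does not by itself prevent a trajectory with large $V$ from touching $S_2$ again, and your proposed fix (get $V$ below $\tfrac{\pi^2}{2}\lambda_{\min}(\mathbf{K}_r)$ so the sublevel set excludes $S_2$, then apply LaSalle on a compact positively invariant sublevel set) is the right way to close that gap rigorously. Net: same first half, a workable but less complete variant of the second half, plus an honest accounting of a step the paper glosses over.
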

\begin{proof}
    Let $(\tilde{\mathbf{r}}, \tilde{\boldsymbol{\omega}})\in S\times \mathbb{R}^3$, and 
    consider the Lyapunov function candidate
    \begin{equation}
        \mathcal{V}(\tilde{\mathbf{r}}, \tilde{\boldsymbol{\omega}}) = \frac{1}{2} \tilde{\mathbf{r}}^\top \mathbf{K}_r \tilde{\mathbf{r}} + \frac{1}{2} \tilde{\boldsymbol{\omega}}^\top \mathbf{J} \tilde{\boldsymbol{\omega}}.
        \label{eq:lyap}
    \end{equation}
    Taking the time derivative, we get
    \begin{equation}
    \begin{split}
        \dot{\mathcal{V}} &= \dot{\tilde{\mathbf{r}}}^\top \mathbf{K}_r \tilde{\mathbf{r}} + \tilde{\boldsymbol{\omega}}^\top \mathbf{J} \dot{\tilde{\boldsymbol{\omega}}} \\ 
        &= \tilde{\boldsymbol{\omega}}^\top J_l(\tilde{\mathbf{r}})^{-\top} \mathbf{K}_r \tilde{\mathbf{r}} + \tilde{\boldsymbol{\omega}}^\top \left( \mathbf{J}\dot{\boldsymbol{\omega}}_{d/i}^b + {\boldsymbol{\omega}_{b/i}^b}^\wedge \mathbf{J}\boldsymbol{\omega}_{b/i}^b - \boldsymbol{\tau}^b \right) \\
        &= \tilde{\boldsymbol{\omega}}^\top \left( J_l(\tilde{\mathbf{r}})^{-\top} \mathbf{K}_r \tilde{\mathbf{r}} + \mathbf{J}\dot{\boldsymbol{\omega}}_{d/i}^b + {\boldsymbol{\omega}_{b/i}^b}^\wedge \mathbf{J}\boldsymbol{\omega}_{b/i}^b - \boldsymbol{\tau}^b \right)\\
        &= -\tilde{\boldsymbol{\omega}}^\top \mathbf{K}_\omega \tilde{\boldsymbol{\omega}},
        \label{eq:v_dot}
    \end{split}
    \end{equation}
    which is negative semi-definite. However, note that
    \begin{equation}
    \begin{split}
        \dot{\mathcal{V}} = 0 \: &\Longrightarrow \: \tilde{\boldsymbol{\omega}} \equiv 0 \Longrightarrow \: \dot{\tilde{\boldsymbol{\omega}}} \equiv 0 \\ 
        &\Longrightarrow \: \mathbf{J}\dot{\boldsymbol{\omega}}_{d/i}^b + {\boldsymbol{\omega}_{b/i}^b}^\wedge \mathbf{J}\boldsymbol{\omega}_{b/i}^b - \boldsymbol{\tau}^b \equiv 0 \\
        &\Longrightarrow \: J_l(\tilde{\mathbf{r}})^{-\top} \mathbf{K}_r \tilde{\mathbf{r}} = 0 
        \Longrightarrow \: \tilde{\mathbf{r}} = 0,
    \end{split}
    \end{equation}
    where the last result is due to the fact that the matrix $J_l(\tilde{\mathbf{r}})^{-\top} \mathbf{K}_r$ is full-rank, thereby showing that the largest invariant set in $S\times\mathbb{R}^3$ is the origin, and asymptotic stability follows by the LaSalle invariance principle.
    
    Now assume that $(\tilde{\mathbf{r}}(0), \tilde{\boldsymbol{\omega}}(0))\in (\bar{S}\setminus S) \times \mathbb{R}^3$, 
    where the initial body rotation $\mathbf{R}_b^i$ is exactly 180 degrees from the desired rotation $\mathbf{R}_d^i$,
    and assume that the set $(\bar{S}\setminus S) \times \mathbb{R}^3$ is invariant to the dynamics \eqref{eq:r_err_dot}, \eqref{eq:om_err_dot}, and~\eqref{eq:tau}. 
    
    Since $\tilde{\mathbf{r}}\in\bar{S}\setminus S$ we have that $\tilde{\mathbf{r}}=\pm \pi \mathbf{u}$, which implies that $\dot{\tilde{\mathbf{r}}} = \pm \pi \dot{\mathbf{u}}$.  Since $\mathbf{u}^\top\mathbf{u}=1$ we have that $\mathbf{u}^\top\dot{\mathbf{u}}=0$.  Therefore
\begin{equation}
    \mathbf{u}^\top\dot{\tilde{\mathbf{r}}} = \pm\pi\mathbf{u}^\top\dot{\mathbf{u}}=0.  
    \label{eq:u_rdot_zero}
\end{equation}
On the other hand, from Equation~\eqref{eq:r_err_dot} we have that
    \begin{align*}
        \mathbf{u}^\top \dot{\tilde{\mathbf{r}}} &= \mathbf{u}^\top J_{\ell}(\pm\pi\mathbf{u})^{-1}\tilde{\boldsymbol{\omega}} \\
        &= \mathbf{u}^\top \left(\mathbf{I} \mp\frac{\pi}{2}\mathbf{u}^\wedge + \mathbf{u}^\wedge\mathbf{u}^\wedge\right)\tilde{\boldsymbol{\omega}} \\
        &= \mathbf{u}^\top \tilde{\boldsymbol{\omega}} = 0,        
    \end{align*}
where we have used Equation~\eqref{eq:jl_inv} and the fact that $\mathbf{u}$ is orthogonal to $\mathbf{u}^\wedge\tilde{\boldsymbol{\omega}}$, and the last equality follows from Equation~\eqref{eq:u_rdot_zero}.
    Differentiating $\mathbf{u}^\top \tilde{\boldsymbol{\omega}} = 0$ gives
    \(
        \mathbf{u}^\top \dot{\tilde{\boldsymbol{\omega}}} = - \dot{\mathbf{u}}^\top \tilde{\boldsymbol{\omega}},
    \)
    where
    \begin{align*}
        \dot{\mathbf{u}}^\top \tilde{\boldsymbol{\omega}} 
        &= \pm\frac{1}{\pi}\tilde{\boldsymbol{\omega}}^\top J_l^{-\top}(\pm\pi \mathbf{u}) \tilde{\boldsymbol{\omega}} \\
        &= \pm\frac{1}{\pi}\tilde{\boldsymbol{\omega}}^\top (\mathbf{I} \mp \frac{\pi}{2}\mathbf{u}^\wedge + \mathbf{u}^\wedge \mathbf{u}^\wedge)\tilde{\boldsymbol{\omega}} \\
        &= \pm\frac{1}{\pi}\tilde{\boldsymbol{\omega}}^\top \mathbf{u} \mathbf{u}^\top \tilde{\boldsymbol{\omega}} = 0,
    \end{align*}
    where we have used the identity $\mathbf{u}^\wedge \mathbf{u}^\wedge = -\mathbf{I} + \mathbf{u} \mathbf{u}^\top$, and the fact that  $\tilde{\boldsymbol{\omega}}$ and $\mathbf{u}^\wedge \tilde{\boldsymbol{\omega}}$ are orthogonal. Therefore $\mathbf{u}^\top \dot{\tilde{\boldsymbol{\omega}}} = 0$ on $\bar{S}\setminus S$.
    
    Alternatively, from Equations~\eqref{eq:om_err_dot} and~\eqref{eq:tau} we get
    \begin{equation}
        \mathbf{u}^\top \dot{\tilde{\boldsymbol{\omega}}} = \mp\pi \mathbf{u}^\top \mathbf{J}^{-1} J_l^{-\top}(\pm\pi \mathbf{u}) \mathbf{K}_r \mathbf{u} - \mathbf{u}^\top \mathbf{J}^{-1}\mathbf{K}_\omega \tilde{\boldsymbol{\omega}}.
    \end{equation}
    If we let $\mathbf{K}_r = k_r \mathbf{I}$ and $\mathbf{K}_\omega = k_\omega \mathbf{J}$, then 
    \begin{equation}
        \mathbf{u}^\top \dot{\tilde{\boldsymbol{\omega}}} 
            = \mp\pi k_r \mathbf{u}^\top \mathbf{J}^{-1} \mathbf{u}.
    \end{equation}
    Since $\mathbf{J}^{-1}$ is positive definite and $\mathbf{u}$ is a unit vector, $\mathbf{u}^\top \dot{\tilde{\boldsymbol{\omega}}}\neq 0$, which is a contradiction.  Therefore $(\bar{S}\setminus S)\times \mathbb{R}^3$ is not invariant and the system dynamics must enter $S\times\mathbb{R}^3$, and thereby converge to the origin.  The closed-loop system is therefore globally asymptotically stable.
\end{proof}

In \eqref{eq:tau}, the inverse left Jacobian on the rotation error term is only necessary if $\mathbf{K}_r \neq k_r \mathbf{I}$. This control law is similar to the one presented in \cite{Yu2016}, where global exponential stability is proven. However, we feel that our proof is more simple in nature, without the need to express jumping dynamics in the Lie algebra of SO(3).

\section{Simulation Experiments}

We simulated the quadrotor dynamics \eqref{eq:dynamics} and tested the ability of the proposed control scheme to track highly aggressive trajectories. The dynamic parameters we used were $m = 1$ kg, $g = 9.81 \frac{\text{m}}{\text{s}^2}$, and $\mathbf{J} = \text{diag}(0.07, 0.07, 0.12)$ kg m$^2$. The dynamics and controller were updated synchronously at a frequency of 100 Hz. To create the mixing matrix $\mathbf{M}$, we gave the quadrotor an arm length of 0.25 m, a maximum thrust per rotor of 9.81 N, and a maximum torque per rotor of 5 Nm. Additionally, to demonstrate the robustness of the proposed control scheme we added zero-mean Gaussian input noise to each motor throttle input with a standard deviation of 0.04, and we perturbed the mixing matrix used in the controller by increasing the estimated thrust per rotor by 10 percent beyond its true value. The control parameters we used were $\mathbf{W}_e = \text{diag}(2.0, 2.0, 2.0, 1.0, 1.0, 1.0, 10^{-3}, 10^{-3}, 0.1)$, $\mathbf{W}_f = \text{diag}(0.1, 0.1, 1.0)$, $\mathbf{K}_r = 10 \mathbf{I}$, and $\mathbf{K}_\omega = 15 \mathbf{J}$, except where otherwise stated. A video of these experiments can be found at \url{www.youtube.com/watch?v=suEyw84wSoA}.

\subsection{Fast Circles}

We chose sinusoidal trajectories because of their $\mathcal{C}^\infty$ continuity and because they demonstrate the effectiveness of the proposed control scheme well. For the first trajectory, the quadrotor was commanded to follow circles in the $xy$-plane with a diameter of 10 m, a period of 2.5 s, a vertical offset of 5 m, and a commanded heading such that the body $i$-axis points in the direction of travel. 

\begin{figure}
    \centering
    \includegraphics[width=0.47\textwidth]{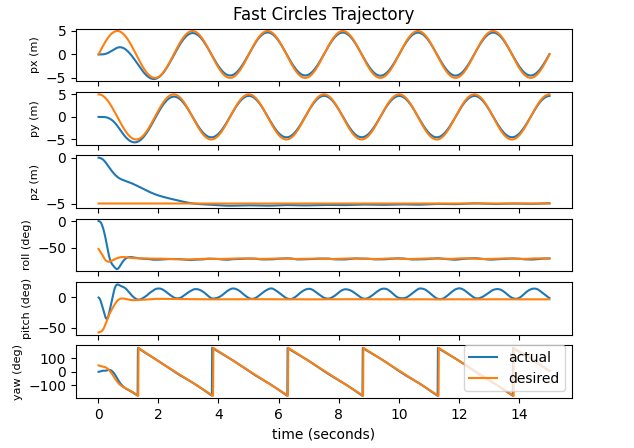}
    \caption{Fast circle trajectory performance.}
    \label{fig:fast_circles}
\end{figure}

\begin{figure}
    \centering
    \subfigure[Fast circle trajectory.]{
        \includegraphics[width=0.21\textwidth]{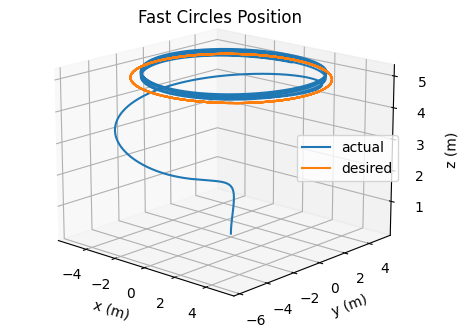}
        \label{fig:fast_circles_3d}
    }
    \hfill
    \subfigure[Flipping loops trajectory.]{
        \includegraphics[width=0.21\textwidth]{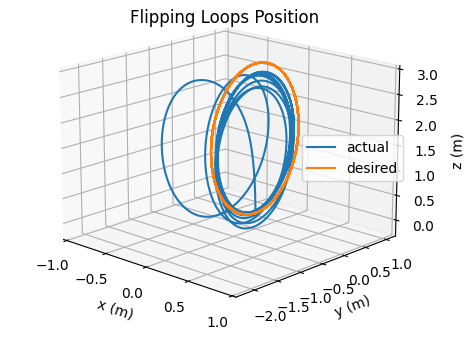}
        \label{fig:flipping_loops_3d}
    }
    \caption{3D position plots of the simulated trajectories. The $z$-axis of both plots is flipped for visual clarity.}
\end{figure}

The results are shown in Figures \ref{fig:fast_circles} and \ref{fig:fast_circles_3d}. The quadrotor converges to the correct altitude within 3 seconds and follows the trajectory fairly well. Due to the aggressiveness of the trajectory and because of the decoupling between the position and rotation controllers, it never quite reaches the correct diameter, but stays fairly close to it. Note that the roll angle throughout the trajectory is around 70 degrees, indicating that the trajectory is quite aggressive.

\subsection{Flipping Loops}

The second trajectory is also sinusoidal. The quadrotor was commanded to do vertical loops in the $yz$-plane, with a $y$ amplitude of 1 m, a $z$ amplitude of 1.5 m, a vertical offset of 1.5 m, and a period of 1.4 s. The commanded heading was zero for the entirety of the trajectory. The trajectory is so fast that the only way the quadrotor can follow it is to flip upside-down whenever it reaches the top of the loop in order to accelerate downward faster than gravity.

\begin{figure}
    \centering
    \includegraphics[width=0.47\textwidth]{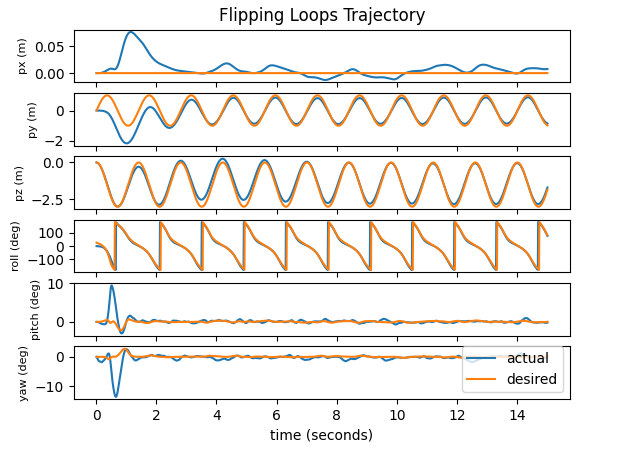}
    \caption{Flipping loop trajectory performance.}
    \label{fig:flip_loops}
\end{figure}

The results are shown in Figures \ref{fig:flip_loops} and \ref{fig:flipping_loops_3d}. The quadrotor converges to the trajectory fairly quickly and follows it well. Note that the roll angle continually exceeds 180 degrees, showing that the quadrotor was indeed flipping upside-down.

\subsection{Upside-down Recovery}

For this simulation, the quadrotor was given an initial roll angle of exactly 180 degrees and commanded to hover in place at $\mathbf{p}_d = \mathbf{0}$. The goal of this trajectory is to verify whether the proposed control scheme is indeed globally stable. The performance of our controller was tested against the controllers presented in \cite{Lee2010} and \cite{Lee2015}. The controller of \cite{Lee2012} was not compared because their choice of error rotation is not defined when $\phi = \pm\pi$. We set $\mathbf{K}_\omega = 30 \mathbf{J}$, and, after a great deal of tuning to ensure good performance, set the parameters of \cite{Lee2015} to (using their notation) $k_1 = 20$, $k_2 = 15$, $\alpha = 1.99$, $\beta = 0.98$, $\delta = 0.05$, $B_{e_\Omega} = 5$, and $k_\omega = 20 \mathbf{J}$. The results are shown in Figure \ref{fig:upside_down}, with \cite{Lee2010} denoted as ``1" and \cite{Lee2015} denoted as ``2". Ours and \cite{Lee2015} were both able to recover the quadrotor, while \cite{Lee2010} was not, for the reason depicted in Figure \ref{fig:err_func}. Our controller was able to flip the quadrotor over more quickly than \cite{Lee2015}, and as a result reached the origin sooner. We believe that this is because the hybrid control scheme of \cite{Lee2015} introduced a non-smooth response when the control configuration jumped at about 0.5 s (this can be seen by the bump in the roll plot), sending the vehicle further from the origin before beginning to converge.

\begin{figure}
    \centering
    \includegraphics[width=0.47\textwidth]{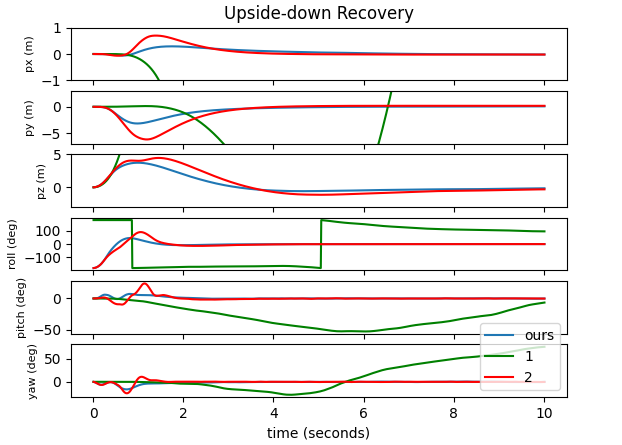}
    \caption{Upside-down recovery performance, comparing to the controllers presented in \cite{Lee2010} and \cite{Lee2015}.}
    \label{fig:upside_down}
\end{figure}

\section{Hardware Experiments}

\subsection{Modified Hardware Controller}

Most quadrotor hardware platforms have an onboard embedded flight control unit (FCU) that tracks attitude or angular rate commands at very high rates (e.g. 1000 Hz) using an inertial measurement unit (IMU). In order to better interface with the FCU, we modified the controller presented in section \ref{sec:geom_ctrl}. We assume that the FCU can achieve a commanded angular rate nearly instantaneously and neglect the angular rate dynamics \eqref{eq:ang_dyn}. The rotational dynamics thereby become $\dot{\mathbf{R}}_b^i = \mathbf{R}_b^i {\boldsymbol{\omega}_c^b}^\wedge$, where $\boldsymbol{\omega}_c^b$ is the angular velocity command sent to the FCU.

Define $\tilde{\mathbf{r}}$ and $\tilde{\boldsymbol{\omega}}$ as in (\ref{eq:r_err},~\ref{eq:om_err}), but replace $\boldsymbol{\omega}_{b/i}^b$ with $\boldsymbol{\omega}_c^b$.
\begin{theorem}
    The control law
    \begin{equation}
        \boldsymbol{\omega}_c^b = \mathbf{R}_d^b \boldsymbol{\omega}_{d/i}^d + J_l(\tilde{\mathbf{r}}) \mathbf{K}_r \tilde{\mathbf{r}}
        \label{eq:om_c}
    \end{equation}
    exponentially drives the error dynamics \eqref{eq:r_err_dot} to zero for any initial value of $\tilde{\mathbf{r}}$ (assuming angular velocity is achieved instantaneously).
\end{theorem}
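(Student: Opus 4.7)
The plan is to substitute the control law \eqref{eq:om_c} directly into the error dynamics \eqref{eq:r_err_dot} and exploit the cancellation between $J_l(\tilde{\mathbf{r}})$ and its inverse to obtain closed-loop dynamics that are linear in $\tilde{\mathbf{r}}$. Under the modified definition that replaces $\boldsymbol{\omega}_{b/i}^b$ with $\boldsymbol{\omega}_c^b$, the angular-velocity error reads $\tilde{\boldsymbol{\omega}} = \mathbf{R}_d^b \boldsymbol{\omega}_{d/i}^d - \boldsymbol{\omega}_c^b$. Plugging in \eqref{eq:om_c}, the feed-forward term $\mathbf{R}_d^b \boldsymbol{\omega}_{d/i}^d$ cancels and I obtain $\tilde{\boldsymbol{\omega}} = -J_l(\tilde{\mathbf{r}})\mathbf{K}_r \tilde{\mathbf{r}}$.

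To close the loop without inverting $J_l$ prematurely, I would work from the identity $J_l(\tilde{\mathbf{r}})\dot{\tilde{\mathbf{r}}} = \tilde{\boldsymbol{\omega}}$, which is precisely the relation established just before the final step in the proof of Lemma~1. Substituting the expression for $\tilde{\boldsymbol{\omega}}$ yields $J_l(\tilde{\mathbf{r}})\dot{\tilde{\mathbf{r}}} = -J_l(\tilde{\mathbf{r}})\mathbf{K}_r\tilde{\mathbf{r}}$; since $J_l(\tilde{\mathbf{r}})$ is nonsingular on all of $\mathbb{R}^3$ (see the argument below), the closed-loop dynamics collapse to $\dot{\tilde{\mathbf{r}}} = -\mathbf{K}_r\tilde{\mathbf{r}}$. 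This is an autonomous linear time-invariant system with $-\mathbf{K}_r$ Hurwitz, so $\tilde{\mathbf{r}} \to 0$ exponentially at a rate no slower than $\lambda_{\min}(\mathbf{K}_r)$. The candidate $\mathcal{V}_r(\tilde{\mathbf{r}}) = \tfrac{1}{2}\tilde{\mathbf{r}}^\top\tilde{\mathbf{r}}$ gives an immediate certificate, with $\dot{\mathcal{V}}_r \le -2\lambda_{\min}(\mathbf{K}_r)\mathcal{V}_r$. No LaSalle-type argument is needed here, because the cancellation removes the state-dependent gain outright; this is exactly what gets lost in Theorem~\ref{thm:full_dyn_ctrl}, where the inner-loop dynamics introduce an extra derivative and force a cross-term structure.

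The main obstacle is verifying that $J_l(\tilde{\mathbf{r}})$ is genuinely invertible for \emph{every} $\tilde{\mathbf{r}} \in \mathbb{R}^3$, including at the singular angles $\phi = m\pi$ where the closed-form expressions \eqref{eq:jl} and \eqref{eq:jl_inv} are formally undefined. I would dispatch this by computing $J_l(\pi\mathbf{u})$ directly with the identity $\mathbf{u}^\wedge\mathbf{u}^\wedge = -\mathbf{I} + \mathbf{u}\mathbf{u}^\top$, obtaining $J_l(\pi\mathbf{u}) = \mathbf{u}\mathbf{u}^\top + (2/\pi)\mathbf{u}^\wedge$. This matrix fixes $\mathrm{span}(\mathbf{u})$ and acts as a scaled quarter-turn on $\mathbf{u}^\perp$, hence is a bijection of $\mathbb{R}^3$; invertibility at the remaining $\phi = m\pi$ follows analogously and is consistent with the Appendix's extensions. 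Consequently the cancellation and the resulting linear dynamics $\dot{\tilde{\mathbf{r}}} = -\mathbf{K}_r\tilde{\mathbf{r}}$ are valid for every initial $\tilde{\mathbf{r}}(0)$, delivering the claimed global exponential convergence.
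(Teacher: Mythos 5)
Your proof is correct, and it takes a cleaner route than the paper's. Both arguments hinge on the same cancellation ($\tilde{\boldsymbol{\omega}} = -J_l(\tilde{\mathbf{r}})\mathbf{K}_r\tilde{\mathbf{r}}$ composed with $\dot{\tilde{\mathbf{r}}} = J_l(\tilde{\mathbf{r}})^{-1}\tilde{\boldsymbol{\omega}}$), but the paper buries it inside a Lyapunov computation, obtaining $\dot{\mathcal{V}} = -\tilde{\mathbf{r}}^\top\mathbf{K}_r^\top\tilde{\mathbf{r}}$ on the set $S_1$ where $\phi\neq\pi$, and then handles $\phi=\pi$ with a separate contradiction argument showing $S_2$ is not invariant (there $\dot{\tilde{\mathbf{r}}}=-\pi\mathbf{u}$ would force $\dot{\mathbf{u}}=-\mathbf{u}$, violating $\mathbf{u}^\top\dot{\mathbf{u}}=0$). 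You instead exhibit the closed loop explicitly as the LTI system $\dot{\tilde{\mathbf{r}}} = -\mathbf{K}_r\tilde{\mathbf{r}}$, which makes the exponential rate immediate and subsumes the $\phi=\pi$ case in one stroke, since $\frac{d}{dt}\|\tilde{\mathbf{r}}\|^2 = -2\tilde{\mathbf{r}}^\top\mathbf{K}_r\tilde{\mathbf{r}}<0$ forces the trajectory off the sphere $\phi=\pi$ instantly. Your direct computation $J_l(\pi\mathbf{u}) = \mathbf{u}\mathbf{u}^\top + (2/\pi)\mathbf{u}^\wedge$ is right and is indeed the inverse of the Appendix's $J_l^{-1}(\pi\mathbf{u}) = \mathbf{u}\mathbf{u}^\top - (\pi/2)\mathbf{u}^\wedge$. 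Two small cautions: your aside that invertibility ``at the remaining $\phi=m\pi$ follows analogously'' is false at $\phi=2\pi$, where $\det J_l = 2(1-\cos\phi)/\phi^2 = 0$ --- you should instead note that $\mathrm{Log}$ returns $\phi\in[0,\pi]$ so only $\phi\in\{0,\pi\}$ arise, with $J_l(\mathbf{0})=\mathbf{I}$; and, like the paper, you are implicitly assuming the error kinematics \eqref{eq:r_err_dot} remain valid at the branch point $\phi=\pi$ of the logarithm, which deserves at least a remark since $\mathrm{Log}$ is discontinuous there.
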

\begin{proof}
    First, assume that $\tilde{\mathbf{r}}$ is initially in the set \eqref{eq:set_def}. Choose the positive definite Lyapunov function candidate
    \begin{equation}
        \mathcal{V}(\tilde{\mathbf{r}}) = \frac{1}{2} \tilde{\mathbf{r}}^\top \tilde{\mathbf{r}}.
    \end{equation}
    Taking the time derivative, we get
    \begin{equation}
    \begin{split}
        \dot{\mathcal{V}} &= \dot{\tilde{\mathbf{r}}}^\top \tilde{\mathbf{r}} = \tilde{\boldsymbol{\omega}}^\top J_l(\tilde{\mathbf{r}})^{-\top} \tilde{\mathbf{r}} \\
        &= \left( \mathbf{R}_d^b \boldsymbol{\omega}_{d/i}^d - \boldsymbol{\omega}_c^b \right)^\top J_l(\tilde{\mathbf{r}})^{-\top} \tilde{\mathbf{r}}.
        \label{eq:v_dot2}
    \end{split}
    \end{equation}
    Using \eqref{eq:om_c} we get $\dot{\mathcal{V}} = -\tilde{\mathbf{r}}^\top \mathbf{K}_r \tilde{\mathbf{r}}$, which is negative definite and can be bounded above by an exponential function of $\tilde{\mathbf{r}}$, thus the system is exponentially stable on $S$.
    
    We can use a similar argument as the one used in Theorem \ref{thm:full_dyn_ctrl} to show that the set $\bar{S} \setminus S$ is not invariant to the error dynamics \eqref{eq:r_err_dot}. Assume that $\tilde{\mathbf{r}}(0) \in (\bar{S} \setminus S)$. Using \eqref{eq:om_c}, the new error dynamics are
    \begin{equation}
        \dot{\tilde{\mathbf{r}}} = \mp\pi\mathbf{K}_r \mathbf{u},
    \end{equation}
    which, if the set $\bar{S} \setminus S$ is invariant, implies that
    \begin{equation}
        \dot{\mathbf{u}} = -\mathbf{K}_r \mathbf{u}.
        \label{eq:u_dot}
    \end{equation}
    Multiplying both sides of \eqref{eq:u_dot} by $\mathbf{u}^\top$, we get $-\mathbf{u}^\top \mathbf{K}_r \mathbf{u} = 0$, again using the fact that $\mathbf{u}^\top \dot{\mathbf{u}} = 0$. This would imply that $\mathbf{u} = 0$ because $\mathbf{K}_r$ is positive definite, but this is contradictory because $\mathbf{u}$ must be a unit vector. Thus the set $\bar{S} \setminus S$ is not invariant to the dynamics \eqref{eq:r_err_dot}, and the closed loop system is globally exponentially stable.
\end{proof}

Note the use of the left Jacobian in Equation~\eqref{eq:om_c} versus the inverse left Jacobian in Equation~\eqref{eq:tau}. This is because the error rotation term in~\eqref{eq:v_dot} is cancelled by subtraction, whereas in~\eqref{eq:v_dot2} the Jacobian is cancelled by multiplying by the inverse.

\subsection{Results}

We tested the ability of our control scheme to track aggressive trajectories with a quadrotor hardware platform. Our platform uses ROSFlight\footnote{\url{rosflight.org}} as its onboard FCU. To estimate the state of the quadrotor, we flew the vehicle in a room set up with an Optitrack\footnote{\url{optitrack.com}} motion capture system. The control parameters we used were $\mathbf{W}_e = \text{diag}(0.5, 0.5, 0.5, 0.2, 0.2, 0.2, 0.1, 0.1, 0.1)$, $\mathbf{W}_f = \text{diag}(0.1, 0.1, 0.3)$, and $\mathbf{K}_r = \text{diag}(5, 5, 5)$. 

We tested two aggressive trajectories. The first was a 1.4 m diameter circle with a period of 2.2 s, and with a commanded heading of 0. The results are shown in Figure \ref{fig:hw_circles}. After the trajectory time started at about 32 seconds, the quadrotor quickly converged to the trajectory and was able to track it well throughout the run. The oscillatory error in height is likely caused by inaccurate modeling of the mixing matrix $\mathbf{M}$.

The second trajectory was a hand-designed 5th degree B-spline that started and ended in the same position with no initial or terminal velocity and acceleration. The trajectory was a large loop in the $yz$-plane whose required acceleration at the top of the loop is so high that the quadrotor must point its rotors downwards by doing a flip in order to track it. Figure \ref{fig:flip_timelapse} shows a time lapse of the trajectory and Figure \ref{fig:hw_flip} shows the results. The quadrotor tracked the $y$ and $z$ position fairly well through most of the trajectory, and was completely upside-down just before 30 seconds. After it completed the majority of the maneuver it deviated from the commanded trajectory for a moment. This effect could likely be reduced by further refining the trajectory to ensure dynamic feasibility and/or tuning of the control parameters.

\begin{figure}
    \centering
    \includegraphics[width=0.47\textwidth]{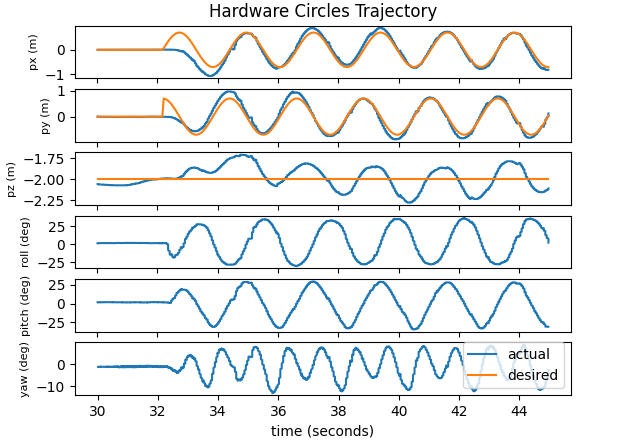}
    \caption{Hardware circle trajectory performance.}
    \label{fig:hw_circles}
\end{figure}

\begin{figure}
    \centering
    \includegraphics[width=0.47\textwidth]{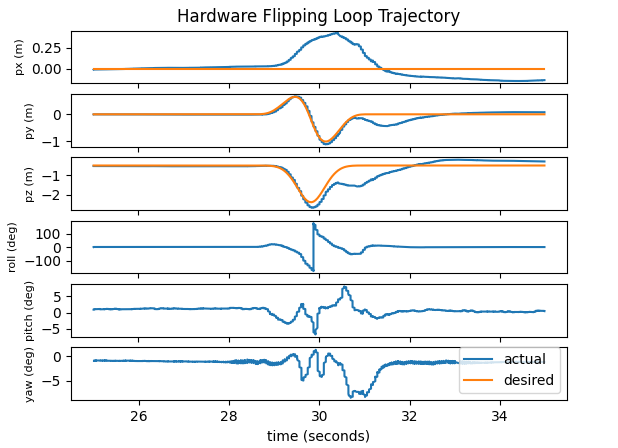}
    \caption{Hardware flipping loop trajectory performance.}
    \label{fig:hw_flip}
\end{figure}

\section{Conclusion}

We have developed a new quadrotor control scheme that is capable of tracking highly aggressive trajectories. Our geometric controller uses the logarithmic map to express rotational error in the Lie algebra of SO(3), which allows us to treat the manifold in a more effective and meaningful manner. We have shown that the proposed geometric controller to be globally attractive, without requiring a complicated hybrid control scheme. Additionally, we have presented an adaptation to this controller that allows it to interface with off-the-shelf quadrotor FCUs and have shown the ability of this control scheme to track highly aggressive trajectories in both simulation and hardware experiments. 


\bibliography{refs}
\bibliographystyle{IEEEtran}

\end{document}